\newcommand{\hh}{HNHN }
\newtheorem{theo}{Theorem}[section]
\newtheorem{lemma}[theo]{Lemma}
\theoremstyle{remark}
\theoremstyle{definition}
\title{HNHN: Hypergraph Networks with Hyperedge Neurons}
\author{
  Yihe Dong\\
  Microsoft \\
  \and
  Will Sawin \\
  Department of Mathematics \\
  Columbia University \\
  \and
  Yoshua Bengio \\
  Mila \\ 
  Universit\'{e} de Montr\'{e}al 
}
\begin{document}

\date{}
\maketitle
\begin{abstract}
Hypergraphs provide a natural representation for many real world datasets. We propose a novel framework, HNHN, for hypergraph representation learning. HNHN is a hypergraph convolution network with nonlinear activation functions applied to both hypernodes and hyperedges, combined with a normalization scheme that can flexibly adjust the importance of high-cardinality hyperedges and high-degree vertices depending on the dataset. We demonstrate improved performance of HNHN in both classification accuracy and speed on real world datasets when compared to state of the art methods.

\end{abstract}

\section{Introduction}

Much real world data can be represented as graphs, as data come in the form of objects and relations between the objects. In many cases, a relation can connect more than two objects. 
A hypergraph can naturally represent such structures. Take for instance paper-authorship graphs, where the vertices represent papers. Since one author can write multiple papers, it is natural to represent authors as hyperedges in a hypergraph. 

Our goal is to learn representations of such structured data with a novel hypergraph convolution algorithm.
First let's recall the vanilla graph neural network (GNN): fix a graph $G$ with $n$ vertices, and let $A \in \mathbb R^{n\times n}$ be its adjacency matrix. Choosing a suitable $d$, we define node representations $X \in \mathbb R^{n \times d} $, as well as weights $W \in \mathbb R^{d\times d} $ and bias $b\in \mathbb R^d$, and a nonlinear activation function $\sigma$. In each layer of the GNN, the node representations are updated as: 
\begin{equation}
\label{graph-convolution} X' = \sigma(AXW + b).
\end{equation}

Some past research has generalized this to hypergraphs by defining an appropriate hypergraph analogue of the adjacency matrix $A$ \cite{hgnn,hcha}. For vertices $i$ and $j$, the entry $A_{ij}$ can be defined as a sum over hyperedges containing both $i$ and $j$ of a weight function, that may depend on the number of vertices in the hyperedge. However, these approaches do not fully utilize the hypergraph structure. In fact, the matrix $A$ defined this way can be seen to equal the adjacency matrix of the graph obtained by replacing each hyperedge with a (weighted) clique. Thus, the hypergraph algorithm will have no better accuracy on a given task than the corresponding graph algorithm, applied to this graph built from cliques.

\noindent{\bf Hyperedge nonlinearity.} To solve this problem, we treat hyperedges as objects worthy of study in their own right. We therefore train a network with one representation $X_V$ for hypernodes and another representation $X_E$ for hyperedges. We can then use the hypernode-hyperedge incidence matrix for the convolution step. A nonlinear function $\sigma$ then acts on both hypernodes and hyperedges. By making this change, we allow the network to learn nonlinear behavior of individual hyperedges, which plausibly exists in many real-world data sets. For instance, the probability that one author of a paper with unknown research interests works in a particular research area might be a nonlinear function of the number of other authors of the paper that work in that research area.

\noindent {\bf Normalization.} In addition, we take a different, more flexible approach to normalization. Normalization in graph convolution is required because formula \eqref{graph-convolution} involves pooling the representations of all vertices adjacent to a given vertex. This produces large values for vertices of high degrees but small values for vertices of low degrees. Normalization ensures numerical stability during training. Different options exist for normalization, depending on how we weight the different vertices (as a function of their degrees). In the hypergraph setting, there is an analogous choice of how to weight different hypernodes and hyperedges. We allow these choices to depend on hyperparameters that are optimized for a given data set.

Indeed, there is little reason to suspect that the most mathematically elegant normalization formula, for instance using the symmetric normalized Laplacian, $D^{-1/2} L D^{-1/2}$, necessarily gives the best accuracy for prediction tasks. Instead, normalization represents a choice of how to weight vertices of large degree compared to vertices of smaller degree. This choice should be made according to whether weighting vertices of large degree highly gives better accuracy on a particular dataest - e.g. on one social network, users with a large number of connections may be important and influential individuals whose activity is useful for prediction, while on another, they may be bots who make connections at random.

Our key contributions in this paper consist of introducing a hypergraph convolution network with nonlinear activation functions applied to both hypernodes and hyperedges, and a more flexible approach to normalization.

\noindent{\bf Paper Overview.} This paper is organized as follows: In \S\ref{sec-model}, we begin by describing our hypergraph convolution method, \hh (Hypergraph Network with Hyperedge Neurons), without the normalization step, to focus on the structure of the convolution. We analyze how \hh relates to, and differs from, approaches that apply graph convolution to hypergraphs by using the clique or star expansion to transform hypergraphs into graphs. We then explain how \hh relates to hypergraph convolution methods proposed in prior works. Finally, we introduce the normalization step. In \S\ref{sec-experiments}, we empirically study aspects of HNHN, from classification accuracy on real datasets to dependence on hyperparameters.




\section{Model architecture and analysis}
\label{sec-model}

First some \textbf{notation}: we define a hypergraph $H$ to consist of a set $V$ of hypernodes and set $E$ of hyperedges, where each hyperedge is itself a set of hypernodes. Let $n= |V|$ and $m=|E|$. Indexing the hypernodes as $v_i$ for $i\in \{1, ..., n\}$, and the hyperedges as $v_j$ for $j \in \{1, ..., m\}$, we define the \emph{incidence matrix} $A \in \mathbb R^{n \times m}$ by $$A_{ij} = \begin{cases} 1 & v_i \in e_j \\ 0 & v_i \notin e_j \end{cases} $$

We then update the hypernode representations $X_V \in \mathbb R^{n \times d} $ and hyperedge representations $X_E \in \mathbb R^{m\times d} $ by a convolution using the incidence matrix $A$. Thus, our update rule is given by the formulas
\begin{equation*} X_{E}' =  \sigma(A^TX_V W_E + b_E)
    \textrm{ \hspace{10pt} and \hspace{10pt}} 
   X_{V}' =  \sigma(A  X_E' W_V + b_V)
\end{equation*}
where $\sigma$ is a nonlinear activation function, $W_V, W_E \in \mathbb  R^{d \times d}$ are weight matrices, and $b_V, b_E \in \mathbb R^D$ are bias matrices. 


We use $\mathcal{N}_i$ throughout to denote the edge neighborhood of $v_i$, i.e. the set of $j$ with $v_i \in e_j$. Similarly, $\mathcal N_j$ denotes the vertex neighborhood of $e_j$, i.e. the set of $i$ with $v_i \in e_j$.


\begin{algorithm}[tb]
   \caption{\hh hypergraph convolution algorithm for node prediction}
   \label{alg-hyper}
\begin{algorithmic}
   \State {\bfseries Input:} Hypergraph incidence matrix $A \in \mathbb{Z}^{n \times m}$, hypernode representations $X_V$ 
   \State {\bfseries Input:} Set of target labels $\{y\}_{k\in L}$
   \State {\bfseries Compute:} $D_{E,l,\alpha}$,  $D_{V ,r, \alpha}$, $D_{V,l,\alpha}$,  and $D_{E ,r , \alpha}$ as in \S~\ref{sec-norm}
   \For{$i=1$ {\bfseries to} \textit{n\_epochs}}
   \State {\bfseries Initialize} $X_E \leftarrow \tilde{0}$. Project $X_V$ to    hidden dimension 
   \For{$j=1$ {\bfseries to} \textit{n\_layers}}
   \State {\bfseries Normalize hypernodes:} $\widetilde{X_V} = D_{E,l,\beta} ^{-1} A D_{V ,r , \beta} X_V$
   \State {\bfseries Update hyperedges:} $X_E =\sigma( \widetilde{X_V}W_{E,j} + b_{E, j}) $
   \State {\bfseries Normalize hyperedges:} $\widetilde{X_E} = D_{V,l,\alpha} ^{-1} A D_{E ,r , \alpha}  X_E$
   \State {\bfseries Update hypernodes:} $X_V =\sigma(\widetilde{X_E} W_{V, j}  + b_{V,j}) $
   \EndFor
   \State Compute cross-entropy {\bfseries loss} between $\{y\}_{k\in L}$ and predictions using $\{X_{V}\}_{k\in L}$ 
   \State {\bfseries Backpropagate} on loss and {\bfseries optimize } parameters e.g. $\{W_E\}_j$, $\{W_V\}_j$ $\{b_E\}_j$, $\{b_V\}_j$
   \EndFor 
   \State {\bfseries Return:} Learned representations $X_V$, $X_E$ for prediction tasks
\end{algorithmic}
\end{algorithm}




\subsection{Relationship with clique and star expansions} 

In this subsection, we discuss how \hh hypergraph convolution relates to graph convolution. To apply graph convolution to hypergraph problems, we must build a graph $G$ from our hypergraph $H$. There are two main approaches to this in the literature.

The first, the {\em clique expansion} \cite{clique-exp, cliqueZhou, cliqueEmbed, cliqueHyperlink} produces a graph whose vertex set is $V$ by replacing each hyperedge $e=\{v_1,\dots, v_k\}$ with a clique on the vertices $\{v_1,\dots,v_k\}$. A slight variant of this produces a weighted graph where the weight of each edge in the clique is equal to some fixed function of $k$. We will consider another variant $G_c$, where we replace each hyperedge with a clique plus an edge from each vertex $v_1,\dots, v_k$ to itself.

The second, the {\em star expansion} or {\em bipartite graph model} \cite{star-exp}, produces a graph $G_*$ whose vertex set is $V \cup E$, with an edge between a hypernode $v$ and a hyperedge $e$ if $v\in e$, and with no edges between hypernodes and hypernodes or hyperedges and hyperedges, making the graph bipartite. (For $e= \{v_1,\dots, v_k\}$ a hyperedge, the subset $\{e,v_1,\dots, v_k\}$ is a star graph, explaining the name.)

We briefly summarize the relationships between our method and graph convolution on these two graphs: 

\begin{itemize}
    \item If we use the same weights for hypernodes and hyperedges, setting $W_E=W_V$ and $b_E=b_V$, then our method is equivalent to graph convolution on the star expansion $G_*$.
    
    \item If we remove the nonlinear activation function $\sigma$, and consider only a linear map, then our method is equivalent to graph convolution on the clique expansion $G_c$.

\end{itemize}

These claims will be formally justified in Equation \eqref{eq-weight} and Lemma \ref{lem-linear}.

Note here that if we remove both the nonlinear activation function and the weight matrix, so we consider only the linear action of the adjacency matrix, then graph convolution on $G_c$ and $G_*$ are equivalent to each other, because the spectra of the adjacency matrices of $G_c$ and $G_*$ are related by a simple operation (explained after Lemma \ref{lem-linear}).

We now explain the advantages of our method over graph convolution on both the clique and star expansion.

\noindent{\bf Comparison with clique expansion. }The clique expansion is a problematic approach to studying any hypergraph problem because it involves a loss of information - i.e. there can be two distinct hypergraphs on the same vertex set with the same clique expansion. An example is provided by the Fano plane. 

This is the hypergraph $F$ with vertex set $\{1,2,3,4,5,6,7\}$ and the seven hyperedges $\{ \{1,2,3\}, \{ 1,4,5\} , \{ 1, 6, 7\}, \\
\{2,4,6\}, \{2, 5,7\} , \{3,4,7\}, \{3,5,6\} \} $. Because each pair of distinct vertices lies in exactly one hyperedge, the clique expansion of $F$ is the complete graph $K_7$ on seven vertices. 

\begin{figure}
\vspace{-1.1em}
\centering
\begin{minipage}[b]{.2\textwidth}
\begin{subfigure}
         \centering
         \includegraphics[width=\linewidth]{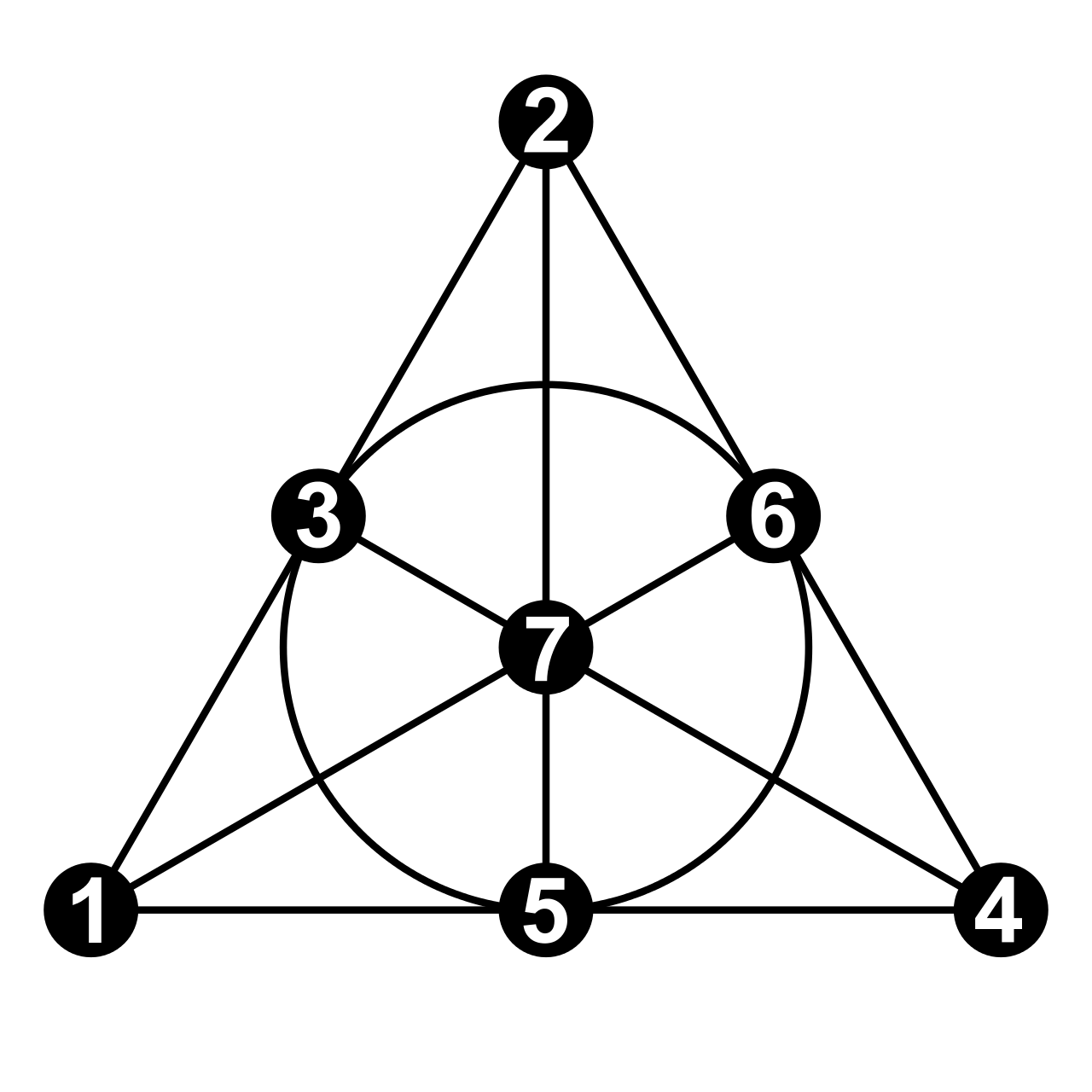}
     \end{subfigure}
     \end{minipage}\qquad
\begin{minipage}[b]{.2\textwidth}     
     \begin{subfigure}
         \centering
         \includegraphics[width=\linewidth]{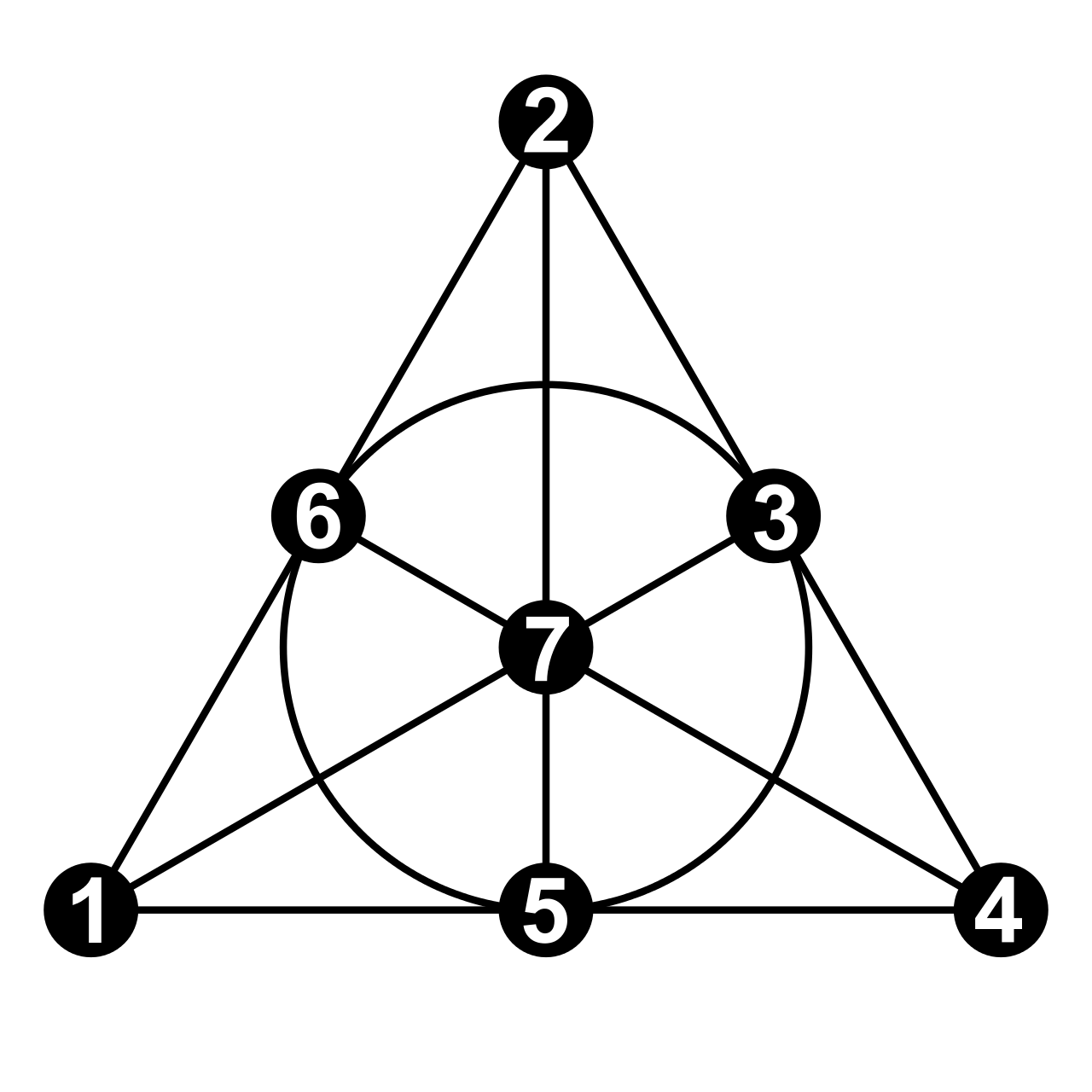}
     \end{subfigure}
\end{minipage}\qquad
\vspace{-25pt}
\caption{The Fano plane, and a copy with the nodes 3 and 6 permuted. Hypernodes are numbered $1$ through $7$. The six straight lines and the circle represent hyperedges.
When given the hypernode label numbers, or other equivalent information, as input features, approaches based on the clique expansion will not be able to distinguish these two hypergraphs, while HNHN and other approaches can.} \label{fano-figure}
\vspace{-1.1em}
\end{figure}


If we permute the seven hypernodes of $F$ by an arbitrary permutation in $S_7$, changing the hyperedges appropriately, we will obtain another hypergraph whose clique expansion is again $K_7$ for the same reason. However it will usually not be the same graph. In fact, by the orbit-stabilizer theorem we can obtain $\frac{7!}{168}=30$ different hypergraphs this way, where $168$ is the number of permutations that stabilize $F$.

Hence graph convolution on the clique expansion will not be able to solve problems on $F$ whose answers change when the hypernode labels are permuted. More generally, examples of this kind can be produced of hypergraphs on $q^2+q+1$ hypernodes for any prime power $q$, using finite projective planes.

\noindent{\bf Comparison with star expansion.} The star expansion does not lose information in this way. However, it treats hypernodes and hyperedges the same. Because, in many datasets, hypernodes and hyperedges describe completely different objects, it is reasonable to give them different weight matrices, and we expect the additional expressive power to improve accuracy.

\cite{holg} showed that clique expansion and star expansion are equivalent in a certain sense, and has been cited to claim that they are equivalent in general. However, this paper worked in a linear setting, considering just the action of the adjacency matrix. In a neural network setting, when a nonlinear activation function $\sigma$ is included, their proof does not apply.

\noindent{\bf Weight simplification.}
We now explain how removing the distinction between $W_V$ and $W_E$ reduces \hh to graph convolution on $G_*$.

For this purpose, let $B \in \mathbb R^{(n+m) \times (n+m)}$ be the adjacency matrix of $G_*$. Then $$B=  \begin{pmatrix} 0 & A \\ A^T & 0 \end{pmatrix}$$
If we set $W_E=W_V=W$ and $b_E=b_V=b$ in the update rule, we have for all $i$:
$$ X_E^{i+1} = \sigma( A^T X_V^{i} W +b) \textrm{ and } X_V^{i+1} = \sigma ( A X_E^{i+1} W + b)$$
and for all $i$ we define $X^{2i} $ to be the concatenation of $X_V^i$ with the $m \times d$ matrix of zeroes, and $X^{2i+1}$ to be the concatenation of the $n \times d$ matrix of zeroes with $X_E^{i+1}$, then for all $i$ we have
\begin{equation}\label{eq-weight} X^{i+1} =\sigma( B X^i W +b)\end{equation} 
because the matrix $B$ sends the $E$ coordinates to the $V$ coordinates and vice versa.

\noindent{\bf Linear simplification.}\label{ss-linear} To relate graph convolution on $G_c$ with hypergraph convolution, we first relate the adjancency matrix of $G_c$ to the incidence matrix $A$.

\begin{lemma} Let $C$ be the adjacency matrix of $G_c$. Then $C= AA^T$. \end{lemma}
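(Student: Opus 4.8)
The plan is to verify the identity entrywise, exploiting the rank-one decomposition of $AA^T$ by columns. Write $a_j \in \mathbb R^n$ for the $j$-th column of $A$, so that $a_j$ is exactly the indicator vector of the vertex neighborhood $\mathcal N_j$ of the hyperedge $e_j$. Then $AA^T = \sum_{j=1}^m a_j a_j^T$, and I would interpret each rank-one summand $a_j a_j^T$ as the adjacency matrix of the contribution that the single hyperedge $e_j$ makes to $G_c$.

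Concretely, the first step is to observe that $(a_j a_j^T)_{i i'} = (a_j)_i (a_j)_{i'}$ equals $1$ when both $v_i \in e_j$ and $v_{i'} \in e_j$, and equals $0$ otherwise. For $i \neq i'$ this is precisely the adjacency matrix of the clique on $\mathcal N_j$, and for $i = i'$ it records a self-loop at each vertex of $e_j$ — exactly the clique-plus-self-loops that the construction of $G_c$ attaches to $e_j$. Summing over $j$ accumulates these contributions, so that $(AA^T)_{ii'} = \sum_j (a_j)_i (a_j)_{i'}$ counts the number of hyperedges containing both $v_i$ and $v_{i'}$ when $i \neq i'$, while $(AA^T)_{ii} = \sum_j (a_j)_i$ counts the number of hyperedges containing $v_i$. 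The second step is to confirm that these two counts are exactly the off-diagonal and diagonal entries of $C$.

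The only point requiring care — and the step I expect to be the main obstacle, though it is conceptual rather than computational — is to pin down the conventions defining $C$ so that the match is exact. I would state explicitly that $G_c$ is a weighted (multi)graph in which the cliques coming from distinct hyperedges are superimposed additively, so that an edge $\{v_i, v_{i'}\}$ contained in several hyperedges receives weight equal to the number of such hyperedges; this is what forces $C_{ii'}$ to equal the hyperedge-count above rather than a plain $0/1$ adjacency. Likewise I would fix the self-loop convention so that each hyperedge through $v_i$ contributes exactly $1$ (not $2$) to $C_{ii}$, matching $(AA^T)_{ii} = \deg(v_i)$. With these conventions recorded, both the off-diagonal and diagonal entries of $C$ agree termwise with those of $AA^T$, establishing $C = AA^T$.
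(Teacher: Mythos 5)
Your proof is correct and is essentially the paper's argument: both verify $C = AA^T$ entrywise by counting, for each pair $(v_i, v_{i'})$, the hyperedges containing both, with the diagonal handled by the self-loops in $G_c$; your rank-one decomposition $AA^T = \sum_j a_j a_j^T$ merely regroups the paper's sum $\sum_j A_{i_1 j} A_{i_2 j}$ by hyperedge. Your explicit care about conventions (multigraph edge counts and each self-loop contributing $1$ to the diagonal) matches what the paper assumes implicitly when it counts edges of $G_c$.
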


\begin{proof}By definition
$$(A A^T)_{i_1 i_2} =\sum_{j } A_{i_1 j} A_{i_2 j} = \sum_{\substack{j \\ v_{i_1} \in e_j \\ v_{i_2} \in e_j}} 1 $$
and $C_{i_1i_2}$ is the number of edges from $v_{i_1}$ to $v_{I_2}$ in $G_c$. Recall that $G_c$ is the graph obtained from $H$ by replacing each hyperedge with a clique and a union of self-edges. The clique obtained from a hyperedge $e_j$ contains an edge from $v_{i_1}$ to $v_{i_2}$ if and only if $v_{i_1}  \in e_j, v_{i_2}\in e_j,$ and $v_{i_1} \neq v_{i_2} $, and the self-edges obtained from a hyperedge $e$ contain an edge from $v_{i_1}$ to $v_{i_2}$ if and only if $v_{i_1}=v_{i_2} \in e_j$, so in total the number of edges from $v_{i_1}$ to $v_{i_2}$ of $G_c$ is the number of hyperedges containing both $v_{i_1}$ and $v_{i_2}$.

Because each entry of $C$ equals the corresponding entry of $A A^T$, it follows that $C = AA^T$ . \end{proof} 

The next lemma shows that convolution on $H$, without the nonlinear activation function $\sigma$, is equivalent to convolution on $G_c$.

\begin{lemma}\label{lem-linear}

If we define $$X_{E}' = A^T X_V  W_E + b_E\textrm{ \hspace{10pt}  and \hspace{10pt}}X_{V}'= A X_E' W_V + b_V $$ then \begin{equation}\label{no-sigma} X_V'= C X_V W_{c} + b_c\end{equation} where $C$ is the adjacency matrix of $G_c$, $W_c \in \mathbb R^{d\times d }$, and $b_c\in \mathbb R^d$.
\end{lemma}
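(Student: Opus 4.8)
The plan is to prove the lemma by direct substitution, collapsing the two-step hyperedge--hypernode update into a single affine map in $X_V$ and then matching it to the clique-expansion form via the identity $C = AA^T$ from the preceding lemma. First I would substitute the expression for $X_E'$ into the formula for $X_V'$,
\[
X_V' = A\bigl(A^T X_V W_E + b_E\bigr) W_V + b_V,
\]
and expand by distributing the matrix products,
\[
X_V' = (AA^T)\, X_V\, (W_E W_V) + A\, b_E\, W_V + b_V.
\]
By the preceding lemma $AA^T = C$, so the first term already has the shape $C X_V W_c$ with $W_c := W_E W_V \in \mathbb{R}^{d\times d}$. This is the routine algebraic core of the argument and presents no difficulty.

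The delicate step is the bias, and here I would be explicit about the broadcasting convention. The vector $b_E \in \mathbb{R}^d$ is added to each of the $m$ rows of $A^T X_V W_E$, so it really contributes the rank-one matrix $\mathbf{1}_m b_E^{T}$, where $\mathbf{1}_m$ denotes the all-ones column vector of length $m$. Tracking this through the substitution, the combined bias term becomes $A\mathbf{1}_m\, b_E^{T} W_V + \mathbf{1}_n b_V^{T}$. Since the $i$-th entry of $A\mathbf{1}_m$ is $\sum_j A_{ij} = |\mathcal{N}_i|$, the vector $A\mathbf{1}_m$ is exactly the vector of vertex degrees, so the total bias equals $\bigl(A\mathbf{1}_m\bigr) b_E^{T} W_V + \mathbf{1}_n b_V^{T}$. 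I would then define $b_c$ to be precisely this combined affine term.

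The main obstacle I expect is purely in the bookkeeping of this bias rather than in the linear-algebraic substitution: the contribution of $b_E$ is weighted node-by-node by the degree $|\mathcal{N}_i|$, so the combined bias is \emph{not} in general a single vector broadcast identically across all hypernodes. Consequently the crux of stating the lemma cleanly is deciding what $b_c$ should be --- either permitting a per-node bias (so that $b_c$ lives in $\mathbb{R}^{n\times d}$ and absorbs the degree weighting), or restricting to the degree-regular case in which $A\mathbf{1}_m = c\,\mathbf{1}_n$ and the term collapses to $\mathbf{1}_n b_c^{T}$ with $b_c \in \mathbb{R}^d$. If the intended content is merely that $X_V'$ is an affine function of $X_V$ whose linear part is $C X_V W_c$ --- which is all that is needed to identify the map with graph convolution on $G_c$ before the nonlinearity --- then the substitution above suffices, and the only care required is to carry the degree-weighting of the bias correctly and to state precisely the space in which $b_c$ lives.
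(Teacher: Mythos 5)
Your proposal is correct and follows exactly the paper's route: substitute $X_E'$ into the formula for $X_V'$, distribute, identify $AA^T = C$ via the preceding lemma, and set $W_c = W_E W_V$. The paper's own proof is precisely this one-line computation, concluding with $b_c = A b_E W_V + b_V$.

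Where you go beyond the paper is in the bias bookkeeping, and your extra care there is justified: the paper simply writes $b_c = A b_E W_V + b_V$ and asserts $b_c \in \mathbb{R}^d$, without remarking that under the row-broadcasting convention $b_E$ really enters as $\mathbf{1}_m b_E^T$, so that $A\mathbf{1}_m b_E^T W_V$ carries the per-node degree weights $|\mathcal{N}_i|$. As you observe, this means $b_c$ is in general a per-node bias (an element of $\mathbb{R}^{n\times d}$, or a degree-weighted rank-one term), and it collapses to a single broadcast vector in $\mathbb{R}^d$ only when the vertex degrees are constant. So your substitution reproduces the paper's proof, and your flagged ``obstacle'' is a genuine imprecision in the paper's statement rather than a gap in your argument; your suggested fixes --- either letting $b_c$ live in $\mathbb{R}^{n\times d}$, or reading the lemma as asserting only that $X_V'$ is an affine function of $X_V$ with linear part $C X_V W_c$ --- are both faithful to the lemma's intended use, namely identifying the $\sigma$-free update with graph convolution on the clique expansion $G_c$.
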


\begin{proof} We have
$$X_V'=   A X_E' W_V + b_V =   A (A^T X_V  W_E + b_E) W_V + b_V = A A^T X_V W_E W_V + A b_E W_V + b_V.$$
Setting $ W_c= W_E W_V $ and $b_c = A b_E W_V + b_V,$ and using $C = AA^T$, we obtain \eqref{no-sigma}. \end{proof}




\noindent{\bf Relationship between clique and star expansions.} 
It is a classical result that the eigenvalues of $B$ are equal to the set of square roots of eigenvalues of $A A^T=C$ together with $m-n$ additional zeroes (see e.g. \cite{bfp}). Thus, the spectral theory of $G_*$ is no more complex or simpler than the spectral theory of $G_c$.

\subsection{Additional connections to prior work} 
Both HGNN \cite{hgnn} and the method of HCHA \cite{hcha} (when the optional attention module is not used) are mathematically equivalent to ordinary graph convolution on the clique expansion. As shown above with the Fano plane, these approaches don't utilize all structural information in the hypergraph. 

HyperGCN \cite{hypergcn} relies on replacing a hypergraph with a graph, but in a more subtle way then a clique expansion. Rather than replacing a hyperedge $\{v_1,\dots,v_k\}$ by a clique, i.e., an edge between every pair of vertices, they pick two special vertices from $\{v_1,\dots,v_k\}$, called \textit{mediators}, and keep only the edges connecting to at least one of the two mediators \cite{ghL}.

This also does not always retain the full information from the hypergraph structure. In fact, in the case where every hyperedge contains at most three hypernodes, HyperGCN is equivalent to the clique expansion, because every edge in the clique touches at least one mediator. As every hyperedge in the Fano plane contains three hypernodes, the Fano plane again shows that this method doesn't use all the structural information. Furthermore, \hh gives a principled way of representing a hyperedge, without devising rules for selecting representatives for each. 

\cite{hypergcn} suggests that hypergraph convolution should be modeled on hypergraph analogues of the graph Laplacian that, unlike the usual Laplacian, are nonlinear. We instead use the (linear) incidence matrix to encode the graph structure and rely on the activation function of the neural network to incorporate nonlinearity. 

Another hypergraph learning method, Hyper-SAGNN \cite{sagnn}, uses a very different architecture. Rather than studying the global graph structure, Hyper-SAGNN focuses on a fixed set of vertices to predict whether they are connected by a hyperedge, treating them symmetrically when doing so. 

On the other hand, some prior graph convolution works have taken a similar approach to HNHN by placing neurons on each hyperedge. These include \cite{mgc}, and, bulding on their work, \cite{message-chemistry}. Unlike HNHN, these works restrict to the case of graphs, and they do not use our normalization scheme, discussed below. 

\subsection{Hypergraph normalization}
\label{sec-norm}
One common message passing scheme is to normalize the messages by the cardinality of the neighborhood after message pooling. In the case of passing messages from hyperedges to hypernodes, this can be expressed as
$$X_V' =\sigma(  D_V^{-1} A X_E' W_V + b_V) $$
where $D_V \in \mathbb R^{n \times n}$ is the diagonal matrix defined by
$$ (D_V)_{i_1i_2}= \begin{cases} |\mathcal N_{i_1} |& i_1 = i_2 \\ 0 & i_1 \neq i_2 \end{cases} $$

This ensures, for unbounded activation functions $\sigma$ like ReLU, that the representation vector does not grow rapidly from one layer to the next when $v_i$ has large degree, and, for bounded activation functions $\sigma$, that the input of $\sigma$ is not too large (because bounded activation functions are approximately constant on large values, which leads to vanishing gradient). We could also multiply on the left and right by $D_V^{-1/2}$, which would be equivalent for activation functions like ReLU that commute with multiplication by a positive real number.

However, the contribution of each hyperedge is given the same weight, regarldess of its degree. We consider a generalization where, before summing over hyperedges incident to a given hypernode, we weight the contribution of each hyperedge by a power of its degree, depending on a real parameter $\alpha$:
$$X_V' =\sigma(  D_{V,l,\alpha} ^{-1} A D_{E ,r , \alpha}  X_E' W_V  + b_V) $$ where $D_{E,r,\alpha}\in \mathbb R^{m \times m}$ and $D_{V, l, \alpha} \in \mathbb R^{n \times n}$ are the diagonal matrices given by  $$(D_{E,r,\alpha})_{j_1 j_2} =\begin{cases} |\mathcal N_{j_1} |^\alpha & j_1 = j_2 \\ 0 & j_1 \neq j_2 \end{cases}  \textrm{\hspace{10pt} and \hspace{10pt}}(D_{V, l, \alpha} )_{i_1i_2}= \begin{cases} \sum_{j \in \mathcal N_{i_1}} |\mathcal N_j|^\alpha   & i_1 = i_2 \\ 0 & i_1 \neq i_2 \end{cases} $$
If $\alpha=0$, then $D_{E,r,\alpha}$ is the identity matrix and $D_{V, l,\alpha} =D_v$, so this generalizes the normalization above.

Equivalently, to calculate the $(i,t)^{th}$ entry of $X_V'$, the matrix multiplication above gives the following formula:
$$(X_V')_{it} = \frac{\sum_{j \in \mathcal N_i} |\mathcal N_j|^{\alpha}  \sum_{ s=1}^d  (X_E')_{is} (W_V)_{st} }{ \sum_{j \in \mathcal N_i} |\mathcal N_j|^\alpha } + (b_V)_t $$ Here the $D_{E,r,\alpha}$ matrix produces the factor $|\mathcal N_j|^{\alpha} $ in each term in the numerator. After including $D_{E,r,\alpha}$ we choose the $D_{E, l ,\alpha}$ matrix to produce the factor $\sum_{j \in \mathcal N_i} |\mathcal N_j|^\alpha $ in the denominator, to normalize for the total size of the numerator. 

When $\alpha>0$, the contributions of hyperedges with large degrees are increased compared with the previous normalization, while if $\alpha<0$, their contributions are decreased. We treat $\alpha$ as a hyperparameter to be optimized for a given data set. 

By fixing $\alpha=0$, as is equivalent to the normalization in many prior works \cite{gnn-kipf,hcha,hgnn}, we would be assuming that hyperedges of different sizes have equal importance, on average, for learning the representation of vertices. Instead, we believe that on some datasets, hyperedges with a large number of vertices are an ineffective guide to labels, compared to hyperedges with a small number of vertices, while for other datasets it may be the other way around. For instance, in coauthorship graphs, papers with a large number of authors may frequently reflect large collaborations between scientists of different fields, and thus will not be as predictive as papers with a smaller number of authors. By allowing a varying hyperpameter $\alpha$, we can choose a weighting for hyperedges that is appropriate for a given dataset. 

Analogously, we weight vertices in terms of their degrees according to hyperparameter $\beta\in \mathbb{R}$:
$$X_E' =\sigma(  D_{E,l,\beta} ^{-1} A D_{V ,r , \beta}  X_V W_E + b_E) $$ 
where
$$(D_{E,l,\beta})_{j_1 j_2} =\begin{cases} \sum_{ i \in \mathcal N_{j_1}} |\mathcal N_i|^\beta & j_1 = j_2 \\ 0 & j_1 \neq j_2 \end{cases}  \textrm{\hspace{10pt} and \hspace{10pt}}(D_{V, r, \beta} )_{i_1i_2}= \begin{cases}| \mathcal N_{i_1} |^\beta   & i_1 = i_2 \\ 0 & i_1 \neq i_2 \end{cases} $$

\subsection{Implementation details}

As outlined in Algorithm \ref{alg-hyper}, the inputs of our algorithm are the incidence relations between the hypernodes and hyperedges, the input feature vectors of the hypernodes, and the target labels for a subset of the hypernodes. From the hypergraph structure, we extract neighborhood information to compute the normalization factors as in \S\ref{sec-norm}, then each layer of \hh hypergraph convolution relays signals from the hypernodes to the hyperedges, with a nonlinearity and node-specific normalization, then analogously from hyperedges to hypernodes. Finally the loss is calculated using cross entropy between the predicted labels based on the learned node representations and the given target labels. Dropout is added to any convolution layer that's not the last to mitigate overfitting \cite{dropout}. ReLU \cite{relu} is used as the nonlinearity throughout.

While Algorithm~\ref{alg-hyper} is stated in terms of the incidence matrix $A$ for clarity, \hh convolution doesn't require any explicit instantiation of $A$, which has size $O(mn)$. 
Even though the number of nonzero entries in $A$ is usually much smaller and thus can be represented as a sparse matrix, never instantiating $A$ is more advantageous for training, as GPUs are much more adaptable for dense over sparse operations \cite{gpu-bottleneck}. 
Our implementation uses efficient GPU operations that only uses indices of hyperedges connected to a hypernode and of hypernodes connected a hyperedge, without ever instantiating the incidence matrix. This contributes to the fast \hh runtimes as reported in Table~\ref{tab-acc}. 

\textbf{Model training} uses the Adam optimizer \cite{adam2015} with a schedule that multiplicatively reduces the learning rate by a factor 0.51 per 100 epochs. While we tune hyperparameters per dataset and task, we found HNHN performance to be robust with respect to hyperparameter variations. The hyperparameters used are: 0.3 dropout rate; 0.04 initial learning rate; 200 training epochs; and the hidden dimension is 400 for all datasets except PubMed, with hidden dimension 800.

\subsection{Time complexity}

 For a hypergraph $H$ with $n$ hypernodes, $m$ hyperedges, average vertex degree $\delta_V$, and hidden dimension $d$, the time complexity of \hh is $O( n \delta_V d + n d^2 + md^2)$. This complexity is because multiplying $X_V$ by $A^T$ or $X_E$ by $A$ takes time proportional to the number of nonzero entries of $A$, which is $n \delta_V$, times $d$. Multiplying by $W_V$ takes $O(nd^2) $ while multiplying by $W_E$ takes time $O(md^2) $. Finally, applying $\sigma$ to each entry takes time $O(nd+md)$ which is $\leq O( nd^2+md^2)$.

Thus, \hh is similar to the most computationally efficient existing hypergraph convolution algorithms, such as HyperGCN. HyperGCN replaces each hyperedge $e_j$ with $2 |\mathcal N_j|-3 = O(|\mathcal N_j|)$ edges, thus replacing $H$ with a graph that has $O( m \delta_E) =O (n \delta_V)$ edges, where $\delta_E$ is the average hyperedge degree of $H$. Applying graph convolution on this graph then takes time proportional to the number of edges, which is again $O (n \delta_V)$.

\hh is faster than hypergraph algorithms based on clique expansion, which require replacing a hyperedge $e_j$ with $\frac{ |\mathcal N_j| (|\mathcal N_j|-1)}{2} = O( |\mathcal N_j|^2) $ edges, for a total of $O(m \delta_E^2) = O( n \delta_V \delta_E) $ edges, producing a graph on which graph convolution takes time $ O( n \delta_V \delta_E d) $.

Table~\ref{tab-acc} describes the timing results for training node classification. Consistent with the above analysis, algorithms that expand each hyperedge into a linear number of edges perform faster than clique expansion-based algorithms. 

\section{Experiments}
\label{sec-experiments}
\subsection{Dataset processing} 
We evaluate the quality of \hh hypergraph representations on several commonly used benchmark datasets: CiteSeer \cite{citeseer-getoor} and PubMed \cite{pubmed-data}, both co-citation datasets, Cora \cite{cora-liu, cora-sen} and DBLP \cite{dblp-data}, both co-authorship datasets. Co-authorship data consist of a collection of papers with their authors. 
To create a hypergraph, each paper becomes a hypernode and each author a hyperedge. The hypernodes $v_1, ..., v_k$ are connected to a hyperedge $e$ if the papers corresponding to $v_1, ..., v_k$ are written by the author corresponding to $e$. Co-citation data consists of a collection of papers and their citation links. To create a hypergraph, each hypernode represents a paper, the hypernodes $v_1, ..., v_k$ are connected to the hyperedge $e$ if the papers corresponding to $v_1, ..., v_k$ are cited by $e$.


Cora co-authorship data were extracted and parsed from the source \cite{cora-data}. Papers from the top ten subject categories are used, alongside their authorship information and content. Hypernodes not connected to any hyperedge, as well as hyperedges containing only one hypernode, were removed. From the abstracts, TFIDF representations with a vocabulary size of 1000 are created as initial input features. To build the vocabulary, the most frequent one- and two-grams from all abstracts are selected, with a simple stopword removal mechanism: any term that occurs in more than 20\% of documents are discarded. 

DBLP data were collected from \cite{dblp-data-aminer} and processed as described in \cite{hypergcn}: bag-of-words vectors using paper abstracts from five computer science conferences across six subject categories, with the additional step of removing hypernodes not connected to any hyperedge, as our goal is to analyze what can be learned from the hypergraph structure via convolution and message-passing, rather than merely using the node features. Because hypernodes with degree zero have trivial hypergraph structure, and predictions for them would only depend on the features of that node, we remove them from both our training and test sets.

Pubmed and CiteSeer hypergraphs are created using the cocitation relations as given in \cite{citeseer-data}, and as above dangling nodes not attached to any hyperedge are removed. The model input features are TFIDF and bag-of-words vector representations of the document contents for Pubmed and CiteSeer, respectively. Further dataset information are detailed in Table~\ref{tab-data}.

\begin{table}[hb]
\addtolength{\tabcolsep}{-2.pt} 
\centering
\vspace{-3pt}
\begin{tabular}{lclclclclclclc}
\hline
\hline
  & $|V|$ & $|E|$ & \# classes & Avg. edge sz. & Avg. node deg. & Label rate & Feature dim. \\ \hline
  Cora & $16313$ & $7389$ & 10 & 4.26 & 1.93& 5.2\% & 1000 \\
  CiteSeer & $1498$ & $1107$ & 6 & 2.61 & 1.40& 15\% & 3703 \\ 
  DBLP & 41302  & 22363 & 6 & 4.71 & 3.53& 4.2\% & 1425\\
  PubMed & 3840  & 7963 & 3 & 4.35 & 9.02& 2\% & 500\\
  \hline
\end{tabular}
\caption{Further dataset details. $|V|$ denotes number of hypernodes, $|E|$ the number of hyperedges. These datasets were chosen to represent a spectrum with variations in graph sizes, node-to-edge ratio, linkage type (co-authorship vs. co-citation), and numbers of classes.
}
\label{tab-data}
\vspace{-4pt}
\end{table}

\subsection{Hypergraph prediction tasks}

\textbf{Hypernode prediction.} Given a hypergraph and node labels on a small subset of hypernodes, this task is to predict labels on the remaining hypernodes. To train, cross-entropy loss is calculated between the predicted and target labels. We use the Adam \cite{adam2015} optimizer with a learning rate scheduler that multiplicatively reduces the learning rate at regular intervals.

Hyperparameters $\alpha$ and $\beta$ are determined by $5$-fold cross-validation on the training set. 
We tune hyperparameters such as the number of layers and dropout rate. In practice, at most two convolution layers are needed to achieve the reported accuracies. Experiments are done on a 24-core 2.6 GHz-CPU machine with a Nvidia P40 GPU. As shown in Table~\ref{tab-acc}, \hh outperforms state of the art techniques across datasets, while achieving competitive timing results. 


\begin{table}[t]
  \addtolength{\tabcolsep}{-5.pt} 
\centering
\begin{tabular}{|c|c|c|c|c|c|c|c|c|c|}
\hline  & 
\multicolumn{4}{|c|}{Accuracy} & \multicolumn{4}{|c|}{Timing}\\
\hline
  & DBLP & Cora & CiteSeer & PubMed & DBLP & Cora & CiteSeer & PubMed \\
  \hline
   HyperGCN & 71.3$\pm$1.2 & 55.0$\pm$.9 & 54.7$\pm$9.8 & 60.0$\pm$10.7 & 563.4$\pm$27.8 & 183.4$\pm$2.7 & 15.6$\pm$.2 & 171.1$\pm$2.8\\ 
   * Fast & 70.5$\pm$14.3 & 45.2$\pm$12.9 & 56.1$\pm$11.2 & 54.4$\pm$10.0 & \textbf{11.5$\pm$.1} & \textbf{2.9$\pm$.1} & \textbf{1.1$\pm$0.} & \textbf{2.5$\pm$.1}\\ 
   HGNN & 77.6$\pm$.4 & 58.2$\pm$.3 & 61.1$\pm$2.2 & 63.3$\pm$2.2 & 802.9$\pm$59.2 & 298.4$\pm$12.2 & 30.5$\pm$.8 & 270.1$\pm$10.5\\ 
   \hh & \textbf{85.1$\pm$.2} & \textbf{63.9$\pm$.8} & \textbf{64.8$\pm$1.6} & \textbf{75.9$\pm$1.5} & 44.2$\pm$1.3 & 13.6$\pm$5.4 & 1.3$\pm$.1 & 26.6$\pm$.4 \\
 \hline
\end{tabular}
\caption{Hypernode classification accuracy and timing results. Accuracies are in \%, timings are measured in seconds. * Fast stands for HyperGCN Fast. 
}\label{tab-acc}
\vspace{-5pt}
\end{table}




\noindent\textbf{Reducing feature dimensions.} We experiment with node prediction after reducing the input feature dimensions using latent semantic analysis \cite{LSA}. As expected, this led to faster training, at the cost of slightly reduced accuracy. On Cora, after the input feature dimension is reduced from $1000$ to $300$, the accuracies are 63.69$\pm$0.58, 56.63$\pm$0.33, and 42.4$\pm$1.58, for \hh, HGNN, and HyperGCN, respectively, with corresponding training times, 10.84$\pm$0.14, 152.34$\pm$1.14, and 182.56$\pm$2.38 seconds. This is a drop of 0.17, 1.52, and 12.6 points in accuracy, respectively, while the running time is 79.5\%, 51.1\%, and 98.8\% that of the non-reduced case. The accuracy advantage of \hh over state of the art methods is preserved in this setting, showing it is also suitable when computational resources are limited.

\noindent{\bf Edge representations learning.}
Compared to approaches that rely on hyperedge expansions, \hh has the advantage in that it produces one dedicated vector representation per hyperedge. This differs from prior works that expand out a hyperedge into multiple nodes or select a representative subset of hypernodes for each hyperedge \cite{hgnn, hypergcn}. Hence \hh learned representations can be easily adapted for downsteam edge-related tasks such as edge prediction. 
Indeed, on the co-citation dataset CiteSeer where both hypernodes and hyperedges represent papers, when given $15$\% of hyperedge labels (and no hypernode labels), the hyperedge classification accuracy is $62.79\pm 1.43$. This is comparable with the $64.76\pm 1.63$ hypernode prediction accuracy (when only $15\%$ of the hypernodes are labeled), reflecting the hyperedge-hypernode symmetry in HNHN.

\noindent\subsection{Effects of normalization parameters}
As discussed in \S\ref{sec-norm}, the normalization parameter $\alpha$ controls how much weight difference exists between hyperedges of different cardinalities. When $\alpha > 0$, larger-sized hyperedges are given greater weight; when $\alpha < 0$, smaller hyperedges are weighted more. Analogously for $\beta$ on weights of hypernodes. Figure~\ref{fig-alpha-beta} demonstrates the effects of $\alpha$ and $\beta$ on CiteSeer node prediction accuracy, showing that the commonly used normalization corresponding to the special case $\alpha=\beta=0$ is not necessarily optimal for node prediction. We note that the effects of $\beta$ on the accuracy shown in Figure~\ref{fig-alpha-beta} are greater than the effects of $\alpha$. This is likely because the vertex degrees (which we raise to the power $\beta$) vary more than the edge degrees (which we raise to the power $\alpha$). Indeed, the vertex degrees on this dataset have a standard deviation of $3.08$, compared to an average degree of $2.07$, for a ratio of $1.49$, while the edge degrees have a standard deviation of $1.67$ with an average degree of $2.8$, for a ratio of only $0.60$.  

\begin{figure}
\vspace{-.3em}
\centering
\begin{minipage}[b]{.46\textwidth}
\begin{subfigure}
         \centering
         \includegraphics[width=\linewidth]{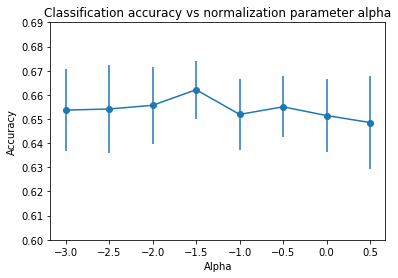}
     \end{subfigure}
     \end{minipage}\qquad
\begin{minipage}[b]{.46\textwidth}     
     \begin{subfigure}
         \centering
         \includegraphics[width=\linewidth]{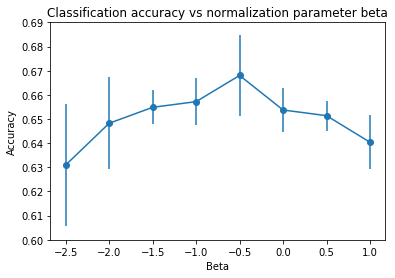}
     \end{subfigure}
\end{minipage}\qquad
\vspace{-26pt}
\caption{Node classification accuracy on CiteSeer as a function of the normalization parameter $\alpha$  while fixing $\beta=0$ (left) and $\beta$ while fixing $\alpha=0$ (right). This shows normalization has tangible effects on accuracy, and that the commonly used normalization corresponding to $\alpha=\beta=0$ is not always optimal.}
\vspace{-1.1em}
\label{fig-alpha-beta}

\end{figure}


\bibliography{main}
\bibliographystyle{alpha}

\end{document}